\begin{document}

\title{Clustering-Based Interpretation of Deep ReLU Network}
\author{Nicola Picchiotti\inst{1, 2}\orcidID{0000-0003-3454-7250} \and Marco Gori\inst{1, 3}
}

\authorrunning{N. Picchiotti et al.}
\institute{SAILAB, University of Siena 
\url{http://sailab.diism.unisi.it} 
\and
 University of Pavia \email{nicola.picchiotti01@universitadipavia.it}\and
 MAASAI, Universit\`e C\^ote d'Azur
\email{marco@diism.unisi.it}
}

\maketitle

\begin{abstract}\label{ABS}
Amongst others, the adoption of Rectified Linear Units (ReLUs) is regarded as one of the ingredients of the success of deep learning. ReLU activation has been shown to mitigate the vanishing gradient issue, to encourage sparsity in the learned parameters, and to allow for efficient backpropagation. In this paper, we recognize that the non-linear behavior of the ReLU function gives rise to a natural clustering when the pattern of active neurons is considered. This observation helps to deepen the learning mechanism of the network; in fact, we demonstrate that, within each cluster, the network can be fully represented as an affine map. The consequence is that we are able to recover an explanation, in the form of feature importance, for the predictions done by the network to the instances belonging to the cluster. Therefore, the methodology we propose is able to increase the level of interpretability of a fully connected feedforward ReLU neural network, downstream from the fitting phase of the model, without altering the structure of the network. A simulation study and the empirical application to the Titanic dataset, show the capability of the method to bridge the gap between the algorithm optimization and the human understandability of the black box deep ReLU networks.
\keywords{ReLU  \and clustering \and explainability \and linearity \and feature importance}
\end{abstract}

\section{Introduction} \label{INT}

The "black box" nature of deep neural network models is often a limit for high-stakes applications like diagnostic techniques, autonomous guide etc. since the reliability of the model predictions can be affected by the incompleteness in the optimization problem formalization \cite{lipton2018mythos}. Recent works have shown that an adequate level of interpretability could enforce the neural network trustworthiness (see \cite{doshi2017towards}). However, this is generally difficult to achieve without alter the mechanism of deep learning \cite{ribeiro2016should}. The topic has became particularly relevant in the last decades, since technology development and data availability led to the adoption of more and more complex models, and widened the gap between performances on train/test data and interpretability. 

Much work has been done in order to explain and interpret the models developed by AI in a human comprehensible manner. The main reason behind these effort is that the human experience and capacity for abstraction allow to monitor the process of the model decisions in a sound way, trying to mitigate the risks of data-driven models. On the other hand, the recent developments of deep neural networks offer enormous progress in artificial intelligence in various sectors. In particular, the ReLU functions have been shown to mitigate the vanishing gradient issue, encourage sparsity in the learned parameters and allow for efficient backpropagation.

In this paper, we tackle the explainability problem of deep ReLU networks, by characterizing the natural predisposition of the network to partition the input dataset in different clusters. The direct consequence of the clustering process is that, within each cluster, the network can be simplified and represented as an affine map. Surprisingly, the deep network provides a notion of cluster-specific importance for each feature, thus easier to interpret.

\section{Bibliographic review} \label{BIB}
Despite the benefits and the expressiveness of Rectifier Networks have been widely investigated, e.g. in \cite{pan2016expressiveness}, the cluster analysis and the consequent interpretability of the networks via the modelling of the pattern of active neurons has not been discussed in the literature, to our knowledge.

On the other hand, many model-specific methodologies can be applied for gaining interpretability in neural network models (see \cite{arrieta2020explainable} and \cite{guidotti2018survey} for a review). The importance of a feature is one of the most used strategies to gain local explainability from an opaque machine learning model. \textit{Permutation feature importance} methods evaluate the feature importance through the variation of a loss metric by permuting the feature's values on a set of instances in the training or validation set. The approach is introduced for random forest in \cite{breiman2001random} and for neural network in \cite{recknagel1997artificial}. Other methods, such as \textit{class model visualization} \cite{simonyan2014deep}, compute the partial derivative of the score function with respect to the input, and \cite{montavon2018methods} introduces expert distribution for the input giving \textit{activation maximization}. In \cite{shrikumar2017learning} the author introduces \textit{deep lift} and computes the discrete gradients with respect to a baseline point, by backpropagating the scoring difference through each unit. \textit{Integrated gradients} \cite{sundararajan2016gradients} cumulates the gradients with respect to inputs along the path from a given baseline to the instance. Finally, a set of well known methods called \textit{additive feature attribution methods} defined in \cite{lundberg2017unified}, rely on the redistribution of the predicted value $\hat{f}(\boldsymbol{x})$ over the input features. They are designed to mimic the behaviour of a predictive function $\hat{f}$ with a surrogate Boolean linear function $g$.

\section{Deep ReLU networks for the partition of the input space}\label{DEE}

In this section, we demonstrate that a deep ReLU neural network gives rise to a partition of the input dataset into a set of clusters, each one characterized by an affine map.

Let us denote by $W_{i}, \ \ i \in [1,\ldots,p]$ the weight matrices associated with the $p$ layers of a given multilayer network with predictor $\tilde{f}$ (of a $q$-dim target variable) and collect\footnote{The $\hat{}$ in the notation means that the bias term is incorporated in the variable.} $\hat{W}_{i}$ in $\hat{W} =[\hat{W}_{1},\ldots,\hat{W}_{p}]$.
For any input $\bm{u} \in \mathbb{R}^{d}$, the initial Directed Acyclic Graph (DAG) ${\cal G}$ of the deep network is reduced to ${\cal G}_{\bm{u}}$
which only keeps the units corresponding to active neurons\footnote{A neuron is considered active for a particular pattern if the input falls in the right linear part of the domain's function.} and the corresponding arcs. 
This DAG is clearly associated with a given set of weights $\hat{W}$. 
We can formally state this pruning for the given neural network, characterized by ${\cal G}$, 
paired with input $\bm{u}$ with weights $\hat{W}$ by  
\[
	{\cal G}_{\bm{u}}=\gamma({\cal G},\hat{W},\bm{u}),
\]  
where, since all neurons operate in ``linear regime'' (affine functions), as stated in the following, 
the output, the composition of affine functions, is in fact an affine function itself. 
\begin{theorem}
	Let $\mathscr{X}_{i} \subset \mathbb{R}^{d_{i}} , \ \ i \in [1,\ldots,p]$ be, where $d_{1}=d$. Let
	$\{h_1,\ldots,h_p \}$ be a collection of affine functions, where 
	$$
		h_{i}: \mathscr{X}_{i} \mapsto \mathscr{Y}_{i}: x \mapsto W_{i} \, x + \bm{b}_{i} = \hat{W}_{i} \, \hat{x},
	$$
	and assume that $\mathscr{Y}_{i}$ is chosen in such a way that
	$\forall i=1,\ldots, p-1: \ \ \mathscr{X}_{i+1} \subset \mathscr{Y}_{i}$, whereas $\mathscr{Y}_{p} \subset  \mathbb{R}^{d_{q}}$. Then we have that
	\[
		{\tilde{f}}(\hat{W},\bm{u}) = h_{p} \circ h_{p-1} \circ \dots \circ h_{2} \circ h_{1} (\bm{u})
	\]
	is affine and we have 
	$
		{\tilde{f}}(W,\bm{u})=f(\hat{\Omega},\hat{\bm{u}}) = \Omega \, \bm{u} + \bm{b},
	$ 
	where\footnote{We stress the dependence of $\Omega$ and $\bm{b}$ from the number of layer (p) since it will be useful for the proof.}
	\begin{align}
		\label{OmegaDef}
		&\hat{\Omega}:=[\Omega, \bm{b}] \\
		\label{W-eq}
		&\Omega(p) = \prod_{i=p}^{1} W_{i}\\
		\label{b-eq}
		&\bm{b}(p) =  \sum_{i=1}^{p} \left( \prod_{t=p+1}^{i+1} W_{t} \right) \cdot \bm{b}_{i} 
	\end{align}
	being $W_{p+1}:=\mathbb{1}$.
\end{theorem}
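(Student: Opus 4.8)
The plan is to proceed by induction on the number of layers $p$, since the statement is at heart a claim about the iterated composition of affine maps. First I would fix notation for the partial compositions, writing $g_k := h_k \circ h_{k-1} \circ \cdots \circ h_1$ for $k \in [1,\ldots,p]$, so that $\tilde{f} = g_p$. The inductive hypothesis I would carry along is that each $g_k$ is affine, of the form $g_k(\bm{u}) = A_k\,\bm{u} + \bm{c}_k$ with $A_k = W_k W_{k-1}\cdots W_1$ and $\bm{c}_k = \sum_{i=1}^{k}\big(W_k W_{k-1}\cdots W_{i+1}\big)\bm{b}_i$, where the product multiplying $\bm{b}_k$ is empty and hence equals $\mathbb{1}$. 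The point of phrasing it this way is that the single index-independent empty-product rule absorbs the convention $W_{p+1}:=\mathbb{1}$ used in~\eqref{b-eq}, so at the very end it only remains to read off $A_p = \Omega(p)$ and $\bm{c}_p = \bm{b}(p)$ to match~\eqref{W-eq} and~\eqref{b-eq}.

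For the base case $k=1$ the composition is simply $h_1$, so $g_1(\bm{u}) = W_1\bm{u} + \bm{b}_1$, giving $A_1 = W_1$ and $\bm{c}_1 = \bm{b}_1$, in agreement with the claimed formulas; here I would also note once that the compatibility hypothesis $\mathscr{X}_{i+1}\subset\mathscr{Y}_i$ is exactly what makes every composition well defined. For the inductive step I would use $g_k = h_k \circ g_{k-1}$ and substitute the affine form of $g_{k-1}$, obtaining $g_k(\bm{u}) = W_k\big(A_{k-1}\bm{u} + \bm{c}_{k-1}\big) + \bm{b}_k = (W_k A_{k-1})\,\bm{u} + (W_k\bm{c}_{k-1} + \bm{b}_k)$, which is again affine. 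The matrix part updates as $A_k = W_k A_{k-1} = W_k W_{k-1}\cdots W_1$, matching~\eqref{W-eq}, while for the bias I would distribute $W_k$ across the sum defining $\bm{c}_{k-1}$ and append the fresh term $\bm{b}_k$, yielding $\bm{c}_k = \sum_{i=1}^{k-1}(W_k\cdots W_{i+1})\bm{b}_i + \bm{b}_k = \sum_{i=1}^{k}(W_k\cdots W_{i+1})\bm{b}_i$, the last summand being supplied by the empty product. Setting $k=p$ and re-reading that empty product as $W_{p+1}=\mathbb{1}$ then recovers~\eqref{b-eq} exactly.

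I do not anticipate a genuine obstacle, as this is an elementary induction; the only real work is bookkeeping of the product indices. The delicate point is the decreasing-order product convention in~\eqref{W-eq}–\eqref{b-eq} combined with the shifting upper cap $p+1 \mapsto i+1$: I would state the empty-product normalization explicitly so that the $i=p$ summand evaluates to $\bm{b}_p$ rather than being silently dropped, and I would be careful that pulling $W_k$ inside the sum does not disturb the left-to-right order $W_k\cdots W_{i+1}$, since matrix multiplication is non-commutative and this ordering is precisely what encodes the depth of the network.
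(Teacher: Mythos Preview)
Your proposal is correct and follows essentially the same approach as the paper: an induction on the number of layers, with the base case $p=1$ and the inductive step obtained by applying $h_p$ to the affine form of the $(p-1)$-layer composition, then regrouping the matrix product and the bias sum. The only cosmetic difference is that you introduce auxiliary names $g_k$, $A_k$, $\bm{c}_k$ and state the empty-product convention explicitly, whereas the paper works directly with $\Omega(p)$ and $\bm{b}(p)$ and splits off the top summand before reabsorbing it via $W_{p+1}=\mathbb{1}$.
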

\begin{proof}
The proof is given by induction on $p$.
\begin{itemize}
\item {\em Basis}: For $p=1$ we have ${\tilde{f}}= W_1 \, \bm{u} + \bm{b}_1$ and  
	$\Omega(1)= W_1$ which confirms~(\ref{W-eq}), and when considering $W_2:=\mathbb{1}$, we have
	\[
	 	\bm{b}(1) = W_{2} \cdot \bm{b}_{1}  = \bm{b}_{1},
	\]
	in according to~(\ref{b-eq}).
\item {\em Induction step}: By induction, a network with $p-1$ layers is defined by 
	an affine transformation % matrix %with $\Omega(p-1) = W^{p-1}$,
	that is 
	\[
		 y(p-1) = \Omega(p-1) \, \bm{u} + \bm{b}(p-1).
	\]
	Hence 
	\begin{align}
		y(p) &= W_{p}\,  y(p-1)+\bm{b}_{p}\nonumber\\
		&= W_{p} (\Omega(p-1) \, \bm{u} + \bm{b}(p-1))+\bm{b}_{p}\nonumber\\
		&= W_{p} \bigg(\prod_{i=p-1}^{1} W_{i} \, \bm{u} +  
		 \sum_{i=1}^{p-1}  \left( \prod_{t=p}^{i+1} W_{t} \right)  \cdot \bm{b}_{i}  \bigg) +\bm{b}_{p}\nonumber\\
		 &= W_{p} \bigg(\prod_{i=p-1}^{1} W_{i} \, \bm{u} +  
		 \sum_{i=1}^{p-2}  \left( \prod_{t=p-1}^{i+1} W_{t} \right)  \cdot \bm{b}_{i} + \mathbb{1} \cdot \bm{b}_{p-1} \bigg) +\bm{b}_{p}\nonumber\\
		&= \prod_{i=p}^{1} W_{i} \, \bm{u} +  
		 \sum_{i=1}^{p-2}  \left( \prod_{t=p}^{i+1} W_{t} \right)  \cdot \bm{b}_{i} +    W_{p} \, \bm{b}_{p-1} +\bm{b}_{p}\nonumber\\
		 %&= \prod_{i=p}^{1} W_{i} \, \bm{u} +  
		 %\sum_{i=1}^{p-1}  \left( \prod_{t=p}^{i+1} W_{t} \right)  \cdot \bm{b}_{i}  +\bm{b}_{p}\\
		&= \prod_{i=p}^{1} W_{i} \, \bm{u} +  
		 \sum_{i=1}^{p-1}  \left( \prod_{t=p}^{i+1} W_{t} \right)  \cdot \bm{b}_{i} +\mathbb{1} \cdot \bm{b}_{p}\nonumber\\
		& = \bigg(\prod_{i=p}^{1} W_{i}\bigg) \, \bm{u} +
		\sum_{i=1}^{p}  \left(  \prod_{t=p+1}^{i+1} W_{t} \right) \cdot \bm{b}_{i}
	\end{align}

\end{itemize}	
%\boxed{}
\end{proof}

Now let $\mathscr{U} \subset \mathbb{R}^{d}$ the input space. The given deep net yields a partition on $\mathscr{U}$
which is associated with the following equivalence relation:
\[
	\bm{u}_1 \sim \bm{u}_{2}  \leftrightarrow 
	{\cal G}_{\bm{u}_1} = {\cal G}_{\bm{u}_2}
	%{\rm f}(\hat{W}, \hat{u}_1) = {\rm f}(\hat{W},\hat{u}_2).
\].
We denote\footnote{In this work $[\cdot]$ is the Iverson's notation, whereas $[\cdot]_{\sim}$ is reserved to the equivalent class induced by equivalence relation $\sim$.} by $[\bm{u}]_{\sim} = \{\bm{v} \in \mathscr{U}:  \bm{v} \sim \bm{u}\}$ and by $\mathscr{U}/\sim$ the corresponding quotient set.
Hence, $[\bm{u}]_{\sim}$ is the equivalent class associated with representer $\bm{u}$ 
which, in turn, corresponds with ${\cal G}_{\bm{u}}$.
Notice that, as a consequence,  $[\bm{u}]_{\sim}$ is fully defined by the set of neurons  of the active
neural network. 
\iffalse 
Let $m:=|\mathscr{U}/\sim|$
\[
	\texttt{y}(u) = \bigvee_{\kappa=1}^{m} u \in [u_{\kappa}].
\]
Here $\texttt{y}(\bm{u})$ is the proposition associated with $y(\bm{u})$ and it is simply defined as 
$\texttt{y}(\bm{u}) = y(\bm{u})>0$. Interestingly, the $m$ propositions $u \in [u_{\kappa}]$ are characterized by the
fired neurons of the deep net configuration. Formally
\[
	 u \in [u_{\kappa}] = \bigwedge_{h=1}^{n_{\kappa}} \texttt{x}_{\kappa h}(u),
\]
where $\texttt{x}_{h}(u) = x_{h}(u)>0$ is one of the $n_{u}$ neurons that define the {\em firing path}.
Hence
\begin{equation}\label{char}
	\texttt{y}(u) = \bigvee_{\kappa=1}^{m} \bigwedge_{h=1}^{n_{\kappa}} \texttt{x}_{\kappa h}(u).
\label{DNF-relunet}
\end{equation}
\begin{theorem}
	The Boolean interpretation {\em $\texttt{y}(u)$ of $y(u)$ } can be written in disjunctive normal form (DNF)
	according to~(\ref{DNF-relunet})
\end{theorem}
\fi 

\paragraph{Feature Importance Explanation}
The characterization done above allows to assigning a matrix $\hat{\Omega}$ to each cluster of the network. In this way, the matrix is able to represent the network for the patterns of the specific cluster as an affine map.

For simplicity, if we consider a problem where the output of the network is scalar, the matrix $\Omega$ reduces to a d-dimensional effective vector $\boldsymbol{\omega}$ whose components can be interpreted as the \textit{feature importance of the cluster's solution}.

\section{Simulation study}\label{SIM}
In this section, we report the simulation studies carried out on the ReLU network architecture applied to a Boolean artificial dataset, in order to assess the power of the clustering-based interpretation. 

We consider a set of $10$ Boolean feature variables $v_{i\in [1, ..., 10]}$. The first $3$ features determine the target variable through the following relation:

\begin{equation}\label{target}
   t = \left( v_1 \land v_3 \right) \lor \left( v_2 \land \lnot v_3 \right) 
\end{equation}

whereas the other $7$ features introduce noise. The rationale of the formula is that the feature $v_3$ split the data set in two groups ($v_3$ and $\lnot v_3$), each ruled by a different term of the Boolean formula involving either $v_1$ or $v_2$ respectively. In the following, we investigate whether the network clustering is able to recognize the different terms of the formula.\\\\
We simulated $100,000$ samples, and we exploited a two-hidden-layer Multilayer Perceptron (MLP) with $4$ and $2$ neurons characterized by ReLU activation function and an output neuron with a sigmoid activation function. The cross-entropy loss is minimized via the Adam stochastic optimizer with a step size of $0.01$ for $10$ epochs, and a batch size of $100$. An activity regularizer with $0.02$ is added to the empirical loss. At the end of the training, the network solves the problem with an accuracy of 100\%. The experiments are implemented with Keras in the Python environment on a regular CPU.

The analysis of the network, by considering the possible patterns of the active neurons, originates three clusters. 
\begin{enumerate}
\item A trivial cluster characterized by all non-active neurons including all the patterns predicted as $0$.
\end{enumerate}
Instead, the other two clusters activate the two neurons of the last layer but a different neuron of the first hidden layer. In figure~\ref{simul} we report the table resuming the $8$ possibilities of the Boolean function restricted to the first $3$ features, as well as the bar plot for the importance of the features for the two non-trivial clusters. As explained above, the importance of the feature is computed by the coefficients of the effective vector for that cluster.
\begin{enumerate}
	\setcounter{enumi}{1}
\item The second cluster, represented by the blue bars in Fig.~\ref{simul}, includes patterns predicted as $1$ and characterized either by $v_1 =v_2=v_3=1$ or by $v_1 =1 ,\, v_2=0 ,\, v_3=1$. We can argue that this cluster takes in charge the patterns predicted as $1$ due to the first term of Eq.~\eqref{target}, i.e. $v_1 \land v_3$. 
Coherently with this setting, the feature importance given by the effective vector is zero for the feature $v_2$ that does not appear in the first term of Eq.~\eqref{target}. On the other side, the coefficient of the effective vector is positive for the feature $v_1$.
\item In a similar way, the third cluster denoted with the orange color, represents the term $v_2 \land \lnot v_3$ of Eq.~\eqref{target} since the patterns belonging to it are predicted as $1$ and are characterized either by $v_1 =v_2=1 \, v_3=0$ or by $v_1 =0\,, v_2=1 \,, v_3=0$. As expected, the feature importance of $v_1$ is zero, whereas for $v_2$ the feature importance is positive.
\end{enumerate}

\begin{figure}[!htb]
\centering
  \includegraphics[width=1\linewidth]{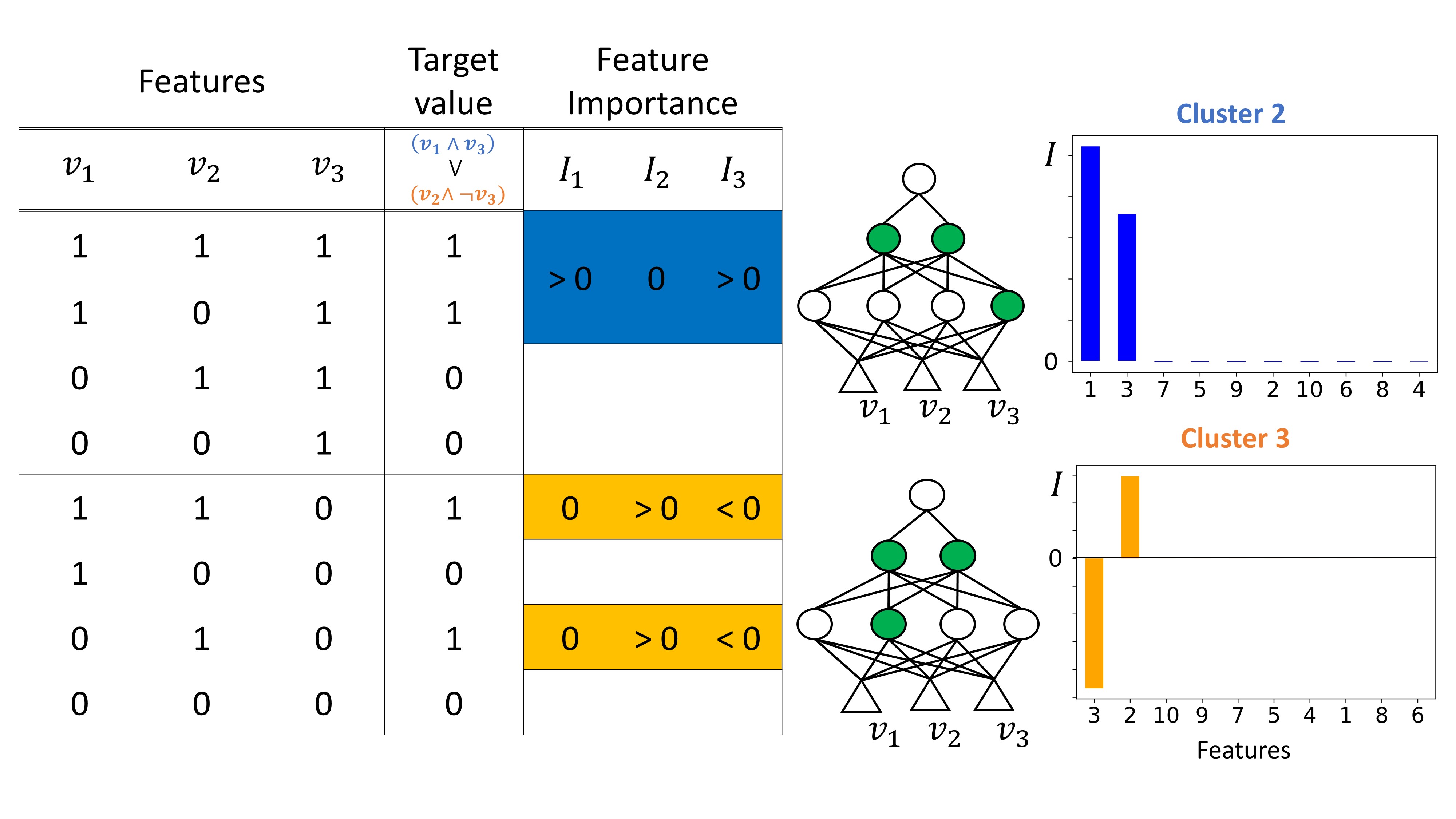}
  \caption{Possible results of the simulated task (8 combinations) and representation of the actual clusters provided by the cluster-based interpretation of the ReLU network.}
  \label{simul}
\end{figure}

From the simulation study, we note that the clustering-based interpretation of the ReLU network helps to achieve a more profound understanding of the solution meaning. In particular, the methodology tries to disentangle the complexity of the solutions into a set of more comprehensible linear solutions within a specific cluster. 

As a further confirmation of the usefulness of cluster-based interpretation, when the activity regularizer is removed, the network keeps solving the task giving rise to $8$ clusters, each one specific for each combination of the first three features. Based on the cluster interpretation, we conclude that the network has chosen a less abstract way to solve the problem.

\section{Titanic dataset}

In this section, we report the experimental analysis performed on the well-known Titanic dataset\footnote{\url{https://www.kaggle.com/c/titanic}}. Each sample represents a passenger with specific features, and the binary target variable indicates if the person survived the Titanic disaster. Standard data cleaning and feature selection procedures\footnote{ \url{https://www.kaggle.com/startupsci/titanic-data-science-solutions}} are implemented. In Table~\ref{titanic} we report a brief description of the features.

\begin{table}[!htb]
  \caption{Features for the Titanic dataset}
  \label{titanic}
  \centering
  \begin{tabular}{ccc}
    \toprule
    %\multicolumn{2}{c}{Part}                   \\
    %\cmidrule(r){1-2}
    Feature     & Description     & Range \\
    \midrule
    Age & Age of the passenger discretized in 5 bins & $[0,4]$     \\
    Gender     & 1 if the passenger is female  & $\{0, 1\}$      \\
    PClass     &  Travel class: first, second, third       & $[1, 3]$  \\
    Fare     & Ticket fare discretized in 3 bins    & $[0, 3]$ \\
    Embarked     & Location for the embarked  & $[0, 2]$  \\
    Title     & Title of the passenger: Mr, Miss, Mrs, Master, Rare    & $[1, 5]$  \\
    Is Alone     & 1 if the passenger has not relatives & $\{0, 1\}$
  \end{tabular}
\end{table}

Similar to the previous experiment, we exploit an MLP with two hidden layers ($4$ and $2$ neurons) characterized by ReLU activation function and an output neuron with a sigmoid activation function. The cross-entropy loss is minimized via the Adam stochastic optimizer with a step size of $0.01$ for $10$ epochs and a batch size of $100$. An activity regularizer with $0.02$ is added to the empirical loss. The experiments are implemented with Keras in the Python environment. The code is freely available at \url{https://github.com/nicolapicchiotti/relu\_nn\_clustering}.

The accuracy of the network is $77\%$ (see \cite{Lamcs229titanic}) and the study of the active neurons patterns provides a partition of the dataset into three clusters, whose feature importance is shown in Figure~\ref{figt}.
\begin{enumerate}[(a)]
\item The first cluster a) includes passengers with mixed features and a percentage of survived ones equal to $38\%$. As expected from the univariate exploratory analyses, \textit{gender} and \textit{class} had the most significant relationship for survival rate.
\item The cluster b) instead, includes only males belonging to the third class ($4\%$ of the overall): the prediction for these passengers is always $0$. This cluster confirms the expectation on the male gender and third class as relevant features for not survive. We observe that the feature importance is quite similar to the one of cluster a) except for the fact that the "age" feature assumes slightly more relevance. Finally, 
\item in the cluster c) the passengers are females and belonging to the first class ($16\%$ of the overall),  the predicted value is always 1. The feature importance, in this case, shows that the high value of the \textit{title}, \textit{age}, and the other features contribute to survival, in addition to being female. This cluster helps us to understand the solution provided by the network. For instance, we note that the "age" feature has an opposite behavior with respect to the other two clusters: in this cluster the older the women, the higher the survival probability.
\end{enumerate}

In this experiment, we have shown that the ReLU network can be disentangled into a set of clusters that can be analyzed individually. The clusters have a practical meaning helping the human to interpret the mechanism of prediction of the network.

\begin{figure}[!htb]
\centering
  \includegraphics[width=0.8\linewidth]{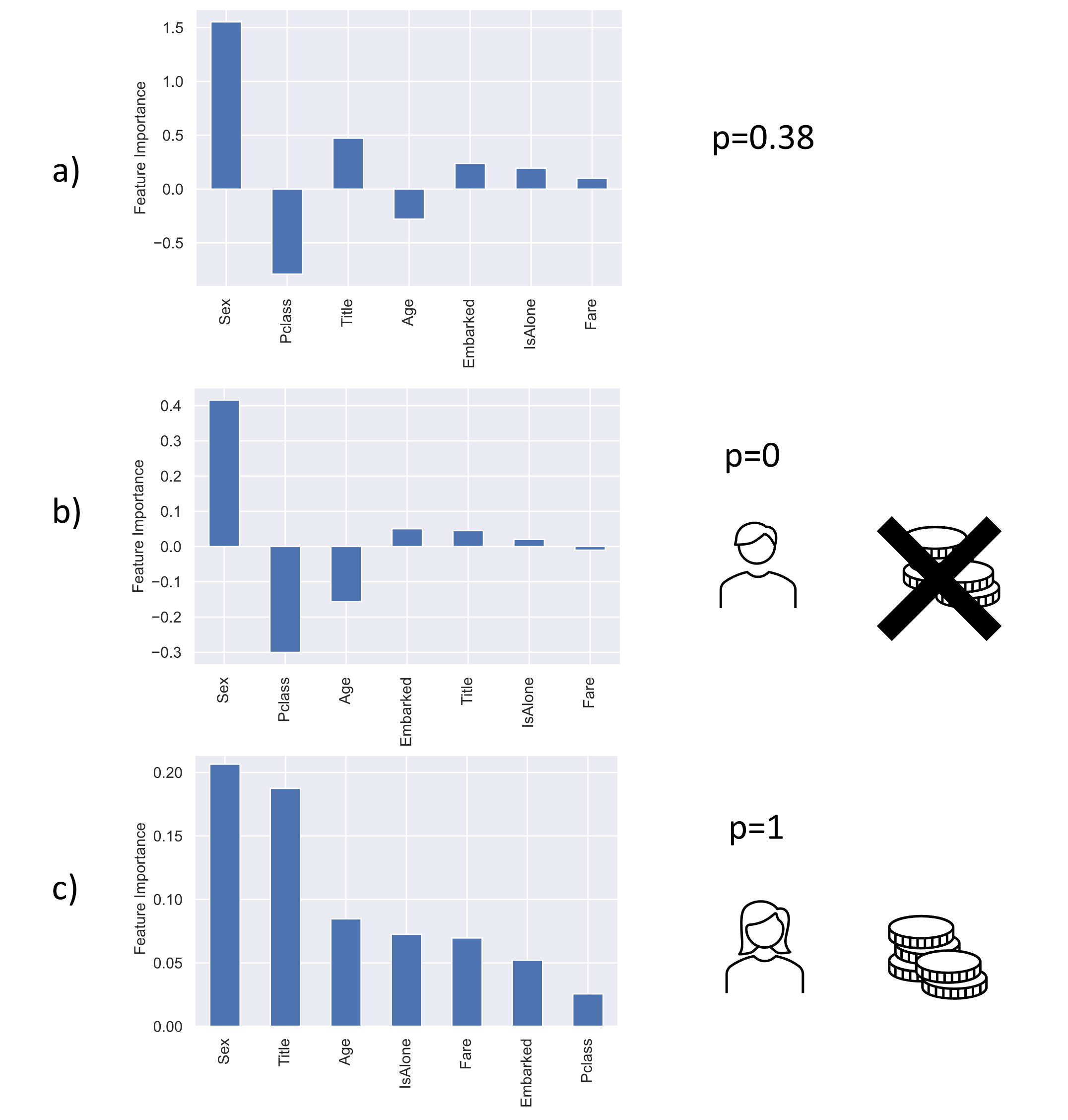}
  \caption{Bar plot with the feature importance for each of the three clusters originated by the ReLU neural network.}
  \label{figt}
\end{figure}

\section{Conclusions and Limitations}\label{CON}

This paper proposes a methodology for increasing the level of interpretability of a fully connected feedforward neural network with ReLU activation functions, downstream from the fitting phase of the model. It is worth noting that the methodology does not alter neither the structure nor the performance of the network and can be applied easily after the training of the model. The methodology relies on the clustering that naturally arises from the binary status of the different neurons of the network, in turns, related to the non-linearity of the ReLU function. The existence of a feature importance explanation based on a affine map for each cluster is proved.
A simulation study and the empirical application to the Titanic dataset show the capability of the method to bridge the gap between the algorithm optimization and the human understandability.

A possible limitation of the work is related to the potential high number of clusters that the networks could generates. Further ways should be explored in order to grant a parsimonious principle of the dataset partition process, such as a principle of minimum entropy or the orthogonality of the hyperplanes representing the clusters. Finally, it is worth mentioning that, if the input variables are highly correlated or belong to a very high-dimensional spaces, the weights of the affine map cannot be trivially interpreted as the features importance, see for instance \cite{franay2014valid}.
\printbibliography

\end{document}